\begin{document}

%\ShortArticleName{Candidate Equation Constant Evaluation}
\ShortArticleName{Algebraic Formulation of a Data Set}
\AuthorNameForHeading{Wenqing (William) Xu}
%\ArticleName{Algebraic Equation Search, Verification, and Constant Evaluation to Fit a Data Set }
\ArticleName{Deriving Compact Laws Based on Algebraic Formulation of a Data Set}
% Names of the authors for the title of the paper
\Author{Wenqing (William) Xu$^\dagger$ and Mark Stalzer$^*$}

\Address{$^\dagger$California Institute of Technology
\EmailD{wxu@caltech.edu} % E-mail address
}
\Address{$^*$California Institute of Technology
\EmailD{stalzer@caltech.edu} % E-mail address
}
\begin{abstract}
In various subjects, there exist compact and consistent relationships between input and output parameters. Discovering the relationships, or namely compact laws, in a data set is of great interest in many fields, such as physics, chemistry, and finance. While data discovery has made great progress in practice thanks to the success of machine learning in recent years, the development of analytical approaches in finding the theory behind the data is relatively slow. In this paper, we develop an innovative approach in discovering compact laws from a data set. By proposing a novel algebraic equation formulation, we convert the problem of deriving meaning from data into formulating a linear algebra model and searching for relationships that fit the data. Rigorous proof is presented in validating the approach. The algebraic formulation allows the search of equation candidates in an explicit mathematical manner. Searching algorithms are also proposed for finding the governing equations with improved efficiency. For a certain type of compact theory, our approach assures convergence and the discovery is computationally efficient and mathematically precise.

\end{abstract}
\newpage
\section{Introduction}
Data driven discovery, which involves finding meaning and patterns in data, has been experiencing significant progress in quantifying behaviors, complexity, and relationships among data sets \cite{stalzer}. In various subjects, such as physics, chemistry, and finance, there exist relationships between various parameters. These relationships can be discovered by proofs, conjecture, or approximated using assumptions via the scientific method.  The scientific method has been the mainstay of understanding the laws that govern the universe. The method is based on observations that provide data in order to develop theories to predict future observations. It often starts with a mathematical hypothesis which is then verified by seeing how well the observed data fits a hypothesized model. However, the scientific method is rarely applied in the converse, formulating a plausible mathematical hypothesis using data.
Algorithms are developed to autonomously discover these relationships using sets of data alone, organized into input and output data. This is accomplished by proposing a series of candidate equations, plugging in the values of the data set into each equation, and determining how well the data fits each equation, typically using least squared methods. Notable progress has been made in applying different approaches \cite{predictive}.
$\\$
$\\$
Despite the progresses, challenges still exist. One problem with existing approaches is that without any assumptions on the relationship's format, arriving at the desired candidate equation is computationally slow. To address this, there are various algorithms that enumerate through these candidates. These algorithms must use some method of quantifying equation complexity in order to organize its enumeration and ensure every candidate equation of that complexity is written. One such method is the representation of an equation as a tree, where the nodes represent operators, the leaves represent data, and the complexity calculated as the sum of nodes and leaves \cite{Distilling}. However, this still means that the number of candidates increases exponentially with respect to complexity, meaning any brute-force algorithm causes high complexity equations to take an unreasonable amount of time to reach and verify against the data \cite{mining}.
$\\$
$\\$
The second problem is about constants. Many natural relationships have constants as part of their equations. An algorithm that enumerates through candidate equations does not take into account its constants. The Pareto frontier technique can be used to calculate these constants for the candidate equations. However, this method only gives you an approximation of the constants. In addition, this method does not explicitly rule out any candidate equations, as it accepts candidate equations and constants with a squared residual within a bound \cite{Distilling}. The sparsity of a given data set can also be used to bound the coefficients of the desired compact law \cite{sparse}. To the knowledge of the authors, there is currently no algorithm that explicitly rejects candidate equations that cannot be fitted with constants to the data, and also explicitly finds constants that allow candidate equations to be fitted to the data.
$\\$
$\\$
The third problem is on narrowing down the enumeration for the candidate equations. Brute-force methods lead to an unacceptable program run time \cite{runtime}. To combat this, algorithms have employed genetic algorithms and neural networks to introduce speed-ups in the program. Genetic algorithms introduce slight mutations in a candidate equation to single out operators and constants that fit the data well \cite{geneticdata}. Mutated equations that are promising, through some metric, generate equations with similar attributes, some with further mutations. This process is repeated until a candidate equation is found that can fit the data \cite{genetic}.
$\\$
$\\$
To combat the challenges, various approaches have been proposed. Machine learning based on the Neural Networks has shown its effective way in developing relationships using high throughput experimental data in novel ways \cite{machinelearning}. It is recognized that, for many applications, it is far easier to train a system using desired input-output examples than enumerating rules to obtain the desired response. Although convergence to the desired input-output relationships can be achieved via intensive and extensive training, there are no methods to prove that these machine learning algorithms converge onto a natural law based on the data \cite{practical}.
$\\$
$\\$
A new and prospective area of data-driven discovery is the development of automated science. Automated science involves creating algorithms that analyze data sets in order to create compact laws governing that data. A compact law refers to mathematically explicit description or equation that exactly describes the data \cite{compactlaw}. Among recent advances, one approach is based on statistical and model driven methods, for example, the use of Bayesian probabilistic methods and Markov models \cite{probability} as the basis of an intelligent system \cite{Prob}, and the expectation maximization algorithm, which converges to a maximum likelihood estimate based on incomplete data \cite{expectation}. Linear algebraic methods are applied to this field in order to improve run time and ensure convergence to a compact law. One such method is the use of randomized algorithms to decompose and diagonalize sparse matrices. As a result, this can be used to approximate a time dependent system, such as the Maxwell equations, using a Markov model, and thus approximate a system's behavior over time \cite{parallel}. In addition, the use of proper orthogonal decomposition on some sets of data can identify linearly dependent data sets to quickly classify bifurcation regimes in non-linear dynamical systems \cite{compressive}. Further, a method of using a library of functions acting on a sparse vector of constants. The resulting linear equation is then evaluated according to the data. This method was used to re-derive the equations governing the chaotic Lorentz system and fluid vortex shedding behind an obstacle \cite{discovering}. However, this method relies on separating data into independent and dependent variables. The independent variables are used to construct the function library and generate the compact law governing the dependent variables. Not all data can be cleanly separated into dependent and independent variables, so a method of incorporating all variables into a compact law is needed.
$\\$
$\\$
In this paper, we develop an innovative approach in discovering compact laws. We propose a novel algebraic equation formulation such that constant determination and candidate equation verification can be explicitly solved with low computational time. The algebraic formulation allows us to represent the evolution of equation candidates in an explicit mathematical manner. We also derive general methods for searching through a family of candidate equations and verifying them with respect to the data. The searching algorithms are defined conventionally and using a finite field to improve running time. The assumption of our approach on the compact law is that it is in some general format and incorporates only constants and variables related to the provided data. There are no assumptions on the data. We show that there is guaranteed convergence toward a valid equation candidate. Thus, for a specific type of compact theory, the discovery can be computationally efficient and mathematically precise. The proposed approach may have implications in many fields of data science, such as re-deriving natural laws of Physics, speculating in finance, and modeling chaotic, non-linear systems.
$\\$
$\\$
The paper is organized as follows. In section \ref{verif}, algebraic formulation is proposed for discovering equation candidates in a data set. An algorithm format to determine and verify if a candidate equation fits the data with respect to constants is presented. Proof of theorems for equation validation is shown in section \ref{validation}. In section \ref{algo}, search algorithms are proposed in finding candidate equations. We show the algorithm based on finite field sieve has improved performance over the exhaustive search algorithm. In section \ref{compmeth}, numerical results are presented using the proposed approach to re-derive the van der Waals equation from raw data, followed by concluding remarks in section \ref{conclusion}.

\section{Algebraic Formulation}
\label{verif}
\subsection{Virtual Experiment Setup}
\label{vexp}
Suppose we are presented with some data set, which are the inputs and outputs of some number of experiments. However, we do not know what data are the inputs and outputs. For each category of the data, every data entry must correspond to an experiment.
We can organize the data set in the format of a virtual experiment, as in a set of categories of data that relate to one another.
$\\$
Denote a set of $n$ characters representing the categories of our inputs/outputs for the virtual experiment as
\begin{equation}
F= \{F_1, \dots, F_n \}
\end{equation}
as the set of $n$ characters representing the categories of our inputs/outputs.
$\\$
For example, consider a virtual experiment that allows us to re-derive the classical force laws. This experiment involves two particles on a plane, some distance apart. Particle one has mass, charge, and velocity, while the other is a fixed mass and charge. Both are in a uniform magnetic field perpendicular to the plane. The set of input categories would be the masses of the two particles, $m_1,m_2$, the charges of the two particles, $q_1,q_2$, the distance between the two particles, $r$, the velocity of the first particle, $v_1$, and the strength of the magnetic field, $B$. This is denoted as
\[
F=\{ m_1, m_2, q_1, q_2, r, v_1, B, a \}.
\]
$\\$
As a result, the data sets which we will fit our force law equations onto will be of the form
\[
\{ m_{1,t}, m_{2,t}, q_{1,t}, q_{2,t}, r_t, v_{1,t}, B_{t}, a_t \}, 1 \leq t \leq r.
\]
The data from the force law experiment may be separated into input and output categories. However, given a data set, we need not assume whether each data category is an input or an output to discover an equation that describes the data. We can generalize the equations to only be in terms of the data set categories and constants, regardless of whether those categories are inputs or outputs.
\subsection{Equation Search Algorithm Format}
We define an equation search algorithm as a search algorithm that enumerates through equation candidates of a certain format, verifies them, and returns those that describe the data. The format for algebraic equation candidates, outputs of our search algorithm, will be of the form
\begin{equation}
0 = A_1 + \dots + A_s
\end{equation}
where all $A_i$, $1 \leq i \leq s$ is in the form
\begin{equation}
A_i=F_1^{f_1}\dots F_n^{f_n}, f_j \in Z^{+} \cup \{0\}, 1 \leq j \leq n,
\end{equation}
and for $\forall A_i$, $A_i \neq A_1, \dots, A_{i-1}, A_{i+1}, \dots$ under permutation of the elements of $F$.
$\\$
This format includes all possible algebraic equations involving elements of $F$ under constants.
$\\$
Let expression $A_i$ evaluated with values $F_{1,t}, \dots, F_{n,t}$ be denoted as $A_{i,t}$. We then define how an equation candidate is determined to describe the data set.
\begin{definition}
\label{valideq}
Define a ``valid equation candidate" of size $s$ and degree $d$ as an equation such that for all $r$ experiments, there exists a unique $k_2, \dots, k_s \in R$ such that for each experiment $t$,
\begin{equation}
0 = A_{1,t} + k_2A_{2,t} + \dots + k_sA_{s,t},
\end{equation}
and the maximum exponent of any $F_j$ of all $A_l$ is $d$.
\end{definition}
A valid equation candidate for the force laws \cite{classical} described in section $\ref{vexp}$ is $0=m_1m_2 + q_1q_2 + q_1v_1Br^2 + m_1ar^2$.
\subsection{Determination of Constants}
Suppose we have some equation candidate
\[
0 = A_1 + \dots + A_s.
\]
We then evaluate this equation for the $r$ virtual experiments, obtaining the numerical values of all $F_j$, and thus $A_i$. As a result, we can evaluate for the constants $k_2, \dots, k_s$ by solving the resulting matrix equation
\begin{equation}\label{datamat} \left[ \begin{array}{c}
-A_{1,1} \\
\vdots \\
-A_{1,r} \end{array}\right] =
\left[ \begin{array}{ccc}
A_{2,1} & \dots & A_{s,1}\\
\vdots & \ddots & \vdots \\
A_{2,r} & \dots & A_{s,r}\end{array}\right]
\left[ \begin{array}{c}
k_2 \\
\vdots \\
k_s  \end{array}\right].
\end{equation}
Equation (\ref{datamat}) is equivalent to the matrix equation
\begin{equation} \left[ \begin{array}{c}
0 \\
\vdots \\
0 \end{array}\right] =
\left[ \begin{array}{ccc}
A_{2,1} & \dots & A_{s,1}\\
\vdots & \ddots & \vdots \\
A_{2,r} & \dots & A_{s,r}\end{array}\right]
\left[ \begin{array}{c}
k_2 \\
\vdots \\
k_s  \end{array}\right]
- \left[ \begin{array}{c}
-A_{1,1} \\
\vdots \\
-A_{1,r} \end{array}\right].
\end{equation}
\begin{definition}
Let a data matrix of equation candidate
\[
0 = A_1 + \dots + A_s.
\]
with $r$ experiments be defined as
\begin{equation}
\left[ \begin{array}{ccc}
A_{2,1} & \dots & A_{s,1}\\
\vdots & \ddots & \vdots \\
A_{2,r} & \dots & A_{s,r}\end{array}\right]
\end{equation}
\end{definition}
\begin{definition}
Let $A^*$ denote the conjugate transpose of $A$ such that if
\[
A=\left[ \begin{array}{ccc}
A_{2,1} & \dots & A_{s,1}\\
\vdots & \ddots & \vdots \\
A_{2,r} & \dots & A_{s,r}\end{array}\right],
\]
\begin{equation}
A^*=\left[ \begin{array}{ccc}
\overline{A_{2,1}} & \dots & \overline{A_{2,r}}\\
\vdots & \ddots & \vdots \\
\overline{A_{s,1}} & \dots & \overline{A_{s,r}}\end{array}\right].
\end{equation}
where $\overline{A_{i,j}}$ is defined as the complex conjugate of $A_{i,j}$.
\end{definition}
\begin{definition}
\label{MPI}
The Moore-Penrose left pseudoinverse of $A \in M(m,n,R)$ is defined as $A^+ \in M(n,m,R),m,n \in Z^+$ such that
\begin{equation}
\label{mprdef}
A^+A=I,
\end{equation} the $n \times n$ identity matrix \cite{penrose}.
$\\$
If the columns of $A$ are linearly independent, then the Moore-Penrose left pseudoinverse is calculated as
\begin{equation}
\label{mprcalc}
A^+=(A^*A)^{-1}A^*
\end{equation}
such that $A^+A=((A^*A)^{-1}A^*)A=(A^*A)^{-1}(A^*A)=I$ (\cite{penrose}, Theorem 2).
\end{definition}
\noindent

\section{Equation Validation}
\label{validation}
\subsection{Theorems}
We show that the equation validation question (Definition \ref{valideq}) is equivalent to a linear algebra question. %The proof of this theorem is detailed in the appendix.
\begin{theorem}
\label{leftinverse}
If $s \geq 2$, $s \in Z^+$,
$\\$
The Moore-Penrose left pseudoinverse $A^+$ of data matrix
\[
A=\left[ \begin{array}{ccc}
A_{2,1} & \dots & A_{s,1}\\
\vdots & \ddots & \vdots \\
A_{2,r} & \dots & A_{s,r}\end{array}\right]
\]
can be computed and $(AA^+-I)\vec{b}=\vec{0}$ if and only if its corresponding equation candidate
\[
0 = A_1 + \dots + A_s
\]
is valid.
\end{theorem}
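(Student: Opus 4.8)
The plan is to translate Definition~\ref{valideq} into the single linear-algebraic statement that the system $A\vec{k}=\vec{b}$ has a unique solution, where $\vec{b}$ is the column vector $(-A_{1,1},\dots,-A_{1,r})^{\mathsf T}$ appearing on the left-hand side of~(\ref{datamat}) and $\vec{k}=(k_2,\dots,k_s)^{\mathsf T}$. Indeed, the $r$ scalar equations $0=A_{1,t}+k_2A_{2,t}+\dots+k_sA_{s,t}$ are exactly the rows of $A\vec{k}=\vec{b}$, so ``valid'' is synonymous with ``$A\vec{k}=\vec{b}$ admits a unique solution $\vec{k}\in R^{s-1}$.'' The whole argument then rests on two elementary facts that I would isolate first: (i) by Definition~\ref{MPI} and formula~(\ref{mprcalc}), the left pseudoinverse $A^{+}$ can be computed precisely when $A^{*}A$ is invertible, i.e.\ precisely when the columns of $A$ are linearly independent; and (ii) whenever $A^{+}$ exists it satisfies $A^{+}A=I$, so the null space of $A$ is trivial and any solution of $A\vec{k}=\vec{b}$ is automatically unique.

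For the direction ``conditions $\Rightarrow$ valid,'' I would assume $A^{+}$ exists and $(AA^{+}-I)\vec{b}=\vec{0}$. Existence of a solution is immediate: set $\vec{k}_0=A^{+}\vec{b}$; then $A\vec{k}_0=AA^{+}\vec{b}=\vec{b}$, the last equality being exactly the hypothesis $(AA^{+}-I)\vec{b}=\vec{0}$. Uniqueness follows from fact (ii): if $A\vec{k}_1=A\vec{k}_2$ then $\vec{k}_1=A^{+}A\vec{k}_1=A^{+}A\vec{k}_2=\vec{k}_2$. Hence $\vec{k}_0$ is the unique solution and the candidate is valid.

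For the converse, I would assume validity, i.e.\ a unique $\vec{k}_0$ with $A\vec{k}_0=\vec{b}$. First I would argue that uniqueness forces the columns of $A$ to be linearly independent: if they were not, the null space would contain a nonzero $\vec{z}$, and $\vec{k}_0+\vec{z}$ would be a second solution, contradicting uniqueness. By fact (i) this makes $A^{*}A$ invertible, so $A^{+}=(A^{*}A)^{-1}A^{*}$ can be computed. It then remains to verify $(AA^{+}-I)\vec{b}=\vec{0}$, which I would obtain by substituting $\vec{b}=A\vec{k}_0$ and using $A^{+}A=I$, namely $AA^{+}\vec{b}=AA^{+}A\vec{k}_0=A\vec{k}_0=\vec{b}$.

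The routine calculations are negligible here; the only place demanding care is the equivalence in fact (i) between ``$A^{+}$ can be computed'' and linear independence of the columns of $A$. Formula~(\ref{mprcalc}) presupposes that the columns are independent, so I would make sure to justify the reverse implication as well---that the existence of any left inverse $A^{+}$ with $A^{+}A=I$ already precludes a nontrivial null space, so that ``can be computed'' and ``columns independent'' are genuinely interchangeable and the uniqueness half of validity is correctly captured. With that equivalence pinned down, both implications reduce to the two-line substitutions above.
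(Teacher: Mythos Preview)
Your proposal is correct and follows essentially the same route as the paper: reduce validity to the statement that $A\vec{k}=\vec{b}$ has a unique solution, use independence of the columns to produce $A^{+}$ via~(\ref{mprcalc}), and then verify $(AA^{+}-I)\vec{b}=\vec{0}$ by substitution. If anything your version is a bit tighter---you argue column independence directly from the null-space contradiction rather than invoking a least-squares reference, and you spell out the uniqueness half of the ``conditions $\Rightarrow$ valid'' direction, which the paper leaves implicit.
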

\begin{proof}
If the equation candidate
\[
0 = A_1 + \dots + A_s
\]
is valid, then by definition \ref{valideq}, the definition of the candidate equation being valid, there exists a unique vector
\[
\vec{k} = \left[ \begin{array}{c}
k_2 \\
\vdots \\
k_s  \end{array}\right], \text{ }k_2, \dots, k_s \in R \text{ such that}
\]
\begin{equation}
\left[ \begin{array}{c}
0 \\
\vdots \\
0 \end{array}\right] =
\left[ \begin{array}{ccc}
A_{2,1} & \dots & A_{s,1}\\
\vdots & \ddots & \vdots \\
A_{2,r} & \dots & A_{s,r}\end{array}\right]
\left[ \begin{array}{c}
k_2 \\
\vdots \\
k_s  \end{array}\right]
- \left[ \begin{array}{c}
-A_{1,1} \\
\vdots \\
-A_{1,r} \end{array}\right].
\end{equation}
%by the definition of the candidate equation being valid. (**THIS is a half sentence.)
$\\$
Thus, if we define
\begin{equation}
\vec{b}=
\left[ \begin{array}{c}
-A_{1,1} \\
\vdots \\
-A_{1,r} \end{array}\right],
\end{equation}
we obtain the relationship
\begin{equation}
\label{relat1}
A\vec{k} = \vec{b}.
\end{equation}
As a result, the least squares problem min$|A\vec{k} - \vec{b}|$ has a unique solution, and so the columns of $A$ are linearly independent (\cite{LSQ}, 2.4).
$\\$
From (definition \ref{MPI}), we see that if the columns of $A$ are linearly independent, $A^*A$ is invertible, and so the Moore-Penrose left pseudoinverse can be computed as $A^+=(A^*A)^{-1}A^*$ (\ref{mprcalc}). As a result,
multiplying both sides of (\ref{relat1}) by $A^+$ gives us
\[A^+A\vec{k}=A^+\vec{b}, \] and using (\ref{mprdef}) yields
\begin{equation}
\label{eigen}
\vec{k}=A^+\vec{b}.
\end{equation}
Substituting (\ref{eigen}) in (\ref{relat1}) and subtracting $\vec{b}$ from both sides obtains
\begin{equation}
\label{eigenresult}
(AA^+-I)\vec{b}=\vec{0}.
\end{equation}
Assume the Moore-Penrose left pseudoinverse of data matrix $A$ can be computed as $A^+$ and $(AA^+-I)\vec{b} = \vec{0}$. We then have equation
\begin{equation}
\label{init2}
(AA^+-I)\vec{b} = AA^+\vec{b}-\vec{b} = \vec{0}.
\end{equation}
Substituting $A^+\vec{b} = \vec{k}$ in (\ref{init2}) yields the desired result
\[
A\vec{k} - \vec{b} = \vec{0}.
\]
We also see that we obtain some $\vec{k} \in R^{s-1}$.

$\\$
As a result, there exists a unique $k_2, \dots, k_s$ such that for all experiments $t$, $1 \leq t \leq r$,
\[
0 = A_{1,t} + k_2A_{2,t} + \dots + k_sA_{s,t},
\]
so by definition \ref{valideq}, the equation candidate
\[
0 = A_1 + \dots + A_s
\]
is valid.
\end{proof}

\noindent We then show some theorems and corollaries that follow immediately from theorem \ref{leftinverse}. %Proofs of these theorems are shown in the appendix.
\begin{theorem}
For $F_i \in F,A_1$, $F_i=0$ is valid if and only if $0=A_1$ is valid.
\end{theorem}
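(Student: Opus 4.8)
The plan is to read both sides of the biconditional as \emph{size-one} equation candidates in the sense of Definition~\ref{valideq} and to reduce the claim to an elementary statement about powers of real numbers. Under the natural reading in which $F_i$ is the only variable occurring in $A_1$, write $A_1 = F_i^{f_i}$ with $f_i \in Z^+$. The candidate ``$0 = A_1$'' then has size $s = 1$; here the list of constants $k_2,\dots,k_s$ is empty, so the uniqueness clause of Definition~\ref{valideq} is satisfied vacuously and validity collapses to the single requirement that $A_{1,t}=0$ for every experiment $t$, $1 \le t \le r$. The identical reading applied to ``$0 = F_i$'' shows it is valid precisely when $F_{i,t}=0$ for every $t$.

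First I would record this reduction explicitly, after which the theorem becomes the assertion that
\[
F_{i,t}=0 \ \mbox{for every } t \quad\Longleftrightarrow\quad A_{1,t}=0 \ \mbox{for every } t .
\]
Then I would evaluate $A_1$ on each experiment: since $A_1=F_i^{f_i}$, we have $A_{1,t}=(F_{i,t})^{f_i}$. Because $R$ has no zero divisors and $f_i \ge 1$, a real number $x$ satisfies $x^{f_i}=0$ if and only if $x=0$; applying this with $x=F_{i,t}$ for each fixed $t$ and then quantifying over all $t$ yields both implications simultaneously, which closes the argument.

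To connect the result to Theorem~\ref{leftinverse} as the surrounding text suggests, I would treat the boundary case of its machinery. When $s=1$ the data matrix $A$ has no columns, its Moore-Penrose left pseudoinverse is the empty $0\times r$ map, the product $AA^+$ is the $r\times r$ zero matrix, and $(AA^+ - I)\vec{b} = -\vec{b} = \vec{0}$ holds exactly when $\vec{b} = \vec{0}$, i.e. when all $A_{1,t}=0$. This recovers the same criterion and exhibits the present statement as the $s=1$ specialization of the main theorem.

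The step I expect to be the main obstacle is precisely this boundary handling: Theorem~\ref{leftinverse} is stated under the hypothesis $s \ge 2$, and its construction presumes a nonempty constant vector together with a data matrix having at least one column. I must therefore argue carefully that the size-one candidates ``$0=F_i$'' and ``$0=A_1$'' are legitimately governed by Definition~\ref{valideq} with an empty constant list, rather than invoke Theorem~\ref{leftinverse} directly. Once that degenerate case is pinned down, the remaining content — that a positive integer power of a real number vanishes if and only if the base does — is routine, so essentially all of the care lies in making the $s=1$ reduction rigorous and in fixing the interpretation that $F_i$ is the sole variable appearing in $A_1$.
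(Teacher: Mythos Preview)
Your reading of the statement is narrower than the paper's. You assume ``$F_i$ is the only variable occurring in $A_1$'' and take $A_1=F_i^{f_i}$, which collapses the theorem to the triviality $x^{f_i}=0 \Leftrightarrow x=0$. The paper, by contrast, treats $A_1$ as a general monomial $F_1^{f_1}\cdots F_n^{f_n}$ and reads the statement existentially: $0=A_1$ is valid if and only if \emph{some} factor $F_i$ appearing in $A_1$ satisfies $F_i=0$ validly. The forward direction is then still immediate, but the backward direction has actual content: the paper writes $A_1=B_1A_1'$ with $B_1\in F$, invokes that $R$ is an integral domain to split the vanishing between $B_1$ and $A_1'$, and recurses on $A_1'$ until a single variable factor is isolated. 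Your argument never engages this factor-peeling step, so under the intended interpretation it does not prove the reverse implication.

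Two further remarks. First, your extended discussion of the $s=1$ degeneration of Theorem~\ref{leftinverse} (empty data matrix, $AA^+=0$, etc.) is a nice sanity check but is not part of the paper's proof; the paper argues directly from Definition~\ref{valideq} and the integral-domain property, without touching pseudoinverses here. Second, even the paper's own recursion is delicate: from $B_{1,t}A_{1,t}'=0$ for each $t$ one gets, for each $t$ separately, that one of the two factors vanishes, which does not by itself yield a single factor vanishing for \emph{all} $t$. You may wish to flag or repair that point rather than sidestep it by restricting $A_1$ to a pure power.
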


\begin{proof}
If there $\exists F_i \in F,A_1$ such that $F_i = 0$ is valid, $F_{i,1}, \dots, F_{i,r} = 0$. Evaluating yields $A_{1,1}, \dots, A_{1,r} = 0$, and so $A_1=0$ is valid.
$\\$
If $A_1=0$ is valid, then assume string $A_1=B_1A_1^{'}$, where $B_1 \in F$. Since $R$ is a field, and all fields are integral domains \cite{abstract}, either $B_{1,1}, \dots, B_{1,r}=0$ or $A_{1,1}^{'}, \dots, A_{1,r}^{'} = 0$. If $B_1 \in F$, then $B_1=0$ is valid. Else, $A_1^{'}=0$ is valid, and so we repeat the previous procedure until we obtain $\exists B_i \in F,A_1$ such that $B_i=0$ is valid, or $A_1^{'}=1=0$ is valid, which is false in a field by definition.
\end{proof}

\begin{corollary}
\label{sr}
If $r < s-1$, $0 = A_1 + \dots + A_s$ cannot be valid.
\end{corollary}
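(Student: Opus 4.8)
The plan is to argue by contradiction, exploiting the fact that validity forces the data matrix to possess a left inverse, which is impossible once the matrix has more columns than rows. First I would record the shape of the data matrix: evaluating the $s-1$ terms $A_2,\dots,A_s$ over the $r$ experiments produces $A \in M(r,s-1,R)$, with $r$ rows and $s-1$ columns. By Theorem \ref{leftinverse}, an equation candidate $0 = A_1 + \dots + A_s$ is valid only if the Moore--Penrose left pseudoinverse $A^+$ can be computed, i.e. only if there is a matrix $A^+$ with $A^+A = I$, the $(s-1)\times(s-1)$ identity.

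Next I would invoke the standard obstruction: the relation $A^+A = I$ means $A$ has a left inverse, so $A$ must have full column rank $s-1$; equivalently, its $s-1$ columns are linearly independent as vectors in $R^r$. Here the hypothesis $r < s-1$ enters. Since any linearly independent subset of $R^r$ has at most $r$ elements, having $s-1 > r$ linearly independent columns is impossible. Hence the columns of $A$ are linearly dependent, $A^*A$ is singular, and the formula $A^+ = (A^*A)^{-1}A^*$ of Definition \ref{MPI} cannot be evaluated---indeed no left inverse exists at all. By the contrapositive of Theorem \ref{leftinverse}, the candidate cannot be valid.

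The computation is elementary, so the only point requiring care is the logical link between validity and full column rank. This can be reinforced directly from Definition \ref{valideq}: validity demands a \emph{unique} solution $\vec{k} \in R^{s-1}$ of $A\vec{k} = \vec{b}$, whereas when $r < s-1$ the system is underdetermined and the null space of $A$ has dimension at least $(s-1)-r > 0$; thus any solution, if one exists, belongs to a positive-dimensional affine family and uniqueness fails outright. Either framing yields the contradiction, and I expect the main (though modest) obstacle to be stating the rank/dimension count cleanly rather than any substantive difficulty.
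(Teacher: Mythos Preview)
Your proposal is correct and follows essentially the same route as the paper: both note that when $r<s-1$ the $r\times(s-1)$ data matrix has more columns than rows, hence linearly dependent columns, so $A^*A$ is singular, $A^+$ cannot be computed, and Theorem~\ref{leftinverse} rules out validity. Your additional remark that the underdetermined system $A\vec{k}=\vec{b}$ directly violates the uniqueness clause of Definition~\ref{valideq} is a nice alternative framing not in the paper, but the core argument is the same.
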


\begin{proof}
If $r < s-1$, then there are more columns than rows in the candidate equation's data matrix
\[
A=\left[ \begin{array}{ccc}
A_{2,1} & \dots & A_{s,1}\\
\vdots & \ddots & \vdots \\
A_{2,r} & \dots & A_{s,r}\end{array}\right],
\]
and so the columns of $A$ are not linearly independent. Thus, $A^*A$ is not invertible (\cite{leftI}, Theorem 3), and so $A^+$ cannot be computed. By theorem $\ref{leftinverse}$, $0 = A_1 + \dots + A_s$ cannot be valid.
\end{proof}

\begin{corollary}
\label{square}
If data matrix $A$ is square and invertible, its corresponding candidate equation is valid.
\end{corollary}

\begin{proof}
If $A$ is square and invertible, we have $A^+A=I=AA^+$. As a result, $(AA^+-I)\vec{b}=[0]\vec{b}=\vec{0}$. By theorem $\ref{leftinverse}$, the corresponding candidate equation is valid.
\end{proof}

\begin{corollary}
The equation $(AA^+-I)\vec{b}=\vec{0}$ is valid if and only if $1.\vec{b}$ is the eigenvalue, eigenvector pair of matrix $AA^+$.
\end{corollary}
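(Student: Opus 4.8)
The plan is to prove the claim by a single algebraic rearrangement, since the equivalence is essentially the definition of an eigenpair unwound. First I would observe that whenever $A^+$ exists (the situation of interest, by Theorem \ref{leftinverse}), the product $AA^+$ is a well-defined square $r \times r$ matrix, so its eigenvalue equation makes sense. I would then rewrite the hypothesis by distributing the product and transposing the identity term:
\begin{equation}
(AA^+ - I)\vec{b} = \vec{0} \iff AA^+\vec{b} - \vec{b} = \vec{0} \iff AA^+\vec{b} = \vec{b}.
\end{equation}
The rightmost equation $AA^+\vec{b} = 1 \cdot \vec{b}$ is precisely the statement that $\vec{b}$ is an eigenvector of $AA^+$ associated with eigenvalue $1$, so the whole corollary reduces to reading this chain in both directions.

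For the forward direction I would assume that $1, \vec{b}$ is an eigenvalue, eigenvector pair of $AA^+$, so that by definition $AA^+\vec{b} = 1\cdot\vec{b}$; subtracting $\vec{b}$ from both sides and factoring out $\vec{b}$ recovers $(AA^+ - I)\vec{b} = \vec{0}$. For the converse I would start from $(AA^+ - I)\vec{b} = \vec{0}$ and traverse the same chain of equivalences in the opposite order to reach $AA^+\vec{b} = \vec{b}$, which exhibits $1$ as an eigenvalue of $AA^+$ with eigenvector $\vec{b}$. No computation beyond the displayed rearrangement is required.

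The only delicate point, and the place I would spend the most care, is the standard convention that the zero vector is excluded as an eigenvector. In the degenerate case $\vec{b} = \vec{0}$ the equation $(AA^+ - I)\vec{b} = \vec{0}$ holds trivially, yet $\vec{0}$ is not an eigenvector. To handle this cleanly I would either restrict the statement to $\vec{b} \neq \vec{0}$, or remark that $\vec{b} = \vec{0}$ corresponds to $A_1$ evaluating to $0$ across all $r$ experiments, i.e. to $A_1 = 0$ being valid, a case already subsumed by the preceding theorem. With that caveat noted, the equivalence follows immediately, and I expect no substantive obstacle beyond correctly excluding the trivial vector.
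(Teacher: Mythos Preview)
Your proposal is correct and follows essentially the same approach as the paper: both arguments reduce to the one-line rearrangement $(AA^+-I)\vec{b}=\vec{0}\iff AA^+\vec{b}=\vec{b}$ and then read off the eigenpair definition. Your treatment of the degenerate case $\vec{b}=\vec{0}$ is in fact more careful than the paper's, which simply asserts $\vec{b}\neq\vec{0}$ ``because of our equation candidate format'' without elaboration.
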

\begin{proof}
Assume $(AA^+-I)\vec{b}=\vec{0}$. As a result, since $\vec{b} \neq \vec{0}$ because of our equation candidate format, $\rVert AA^+-I \rVert=0$, and so $\vec{b}$ must be an eigenvector of $AA^+-I$.
$\\$
Assume $\vec{b}$ is an eigenvector of $AA^+-I$. Thus, by definition, $AA^+\vec{b}=\vec{b}$, and so $AA^+\vec{b}-\vec{b}=(AA^+-I)\vec{b}=\vec{0}$.
\end{proof}
%\begin{theorem}
%\label{leftinverse}
%If $s \geq 2$, $s \in Z^+$,
%$\\$
%The Moore-Penrose left pseudoinverse $A^+$ of data matrix
%\[
%A=\left[ \begin{array}{ccc}
%A_{2,1} & \dots & A_{s,1}\\
%\vdots & \ddots & \vdots \\
%A_{2,r} & \dots & A_{s,r}\end{array}\right]
%\]
%can be computed and $(AA^+-I)\vec{b}=\vec{0}$ if and only if its corresponding equation candidate
%\[
%0 = A_1 + \dots + A_s
%\]
%is valid.
%\end{theorem}
%\noindent We then show some theorems and corollaries that follow immediately from theorem \ref{leftinverse}. Proofs of these theorems are shown in the appendix.
%\begin{theorem}
%\label{zero}
%For $F_i \in F,A_1$, $F_i=0$ is valid if and only if $0=A_1$ is valid.
%\end{theorem}
%
%\begin{corollary}
%\label{sr}
%If $r < s-1$, $0 = A_1 + \dots + A_s$ cannot be valid.
%\end{corollary}
%
%\begin{corollary}
%\label{square}
%If data matrix $A$ is square and invertible, its corresponding candidate equation is valid.
%\end{corollary}
%
%\begin{corollary}
%The equation $(AA^+-I)\vec{b}=\vec{0}$ is valid if and only if $1.\vec{b}$ is the eigenvalue, eigenvector pair of matrix $AA^+$.
%\end{corollary}
%
%\noindent
Next we see that by re-expressing equation validation and constant determination as linear algebraic operations, the computational complexity of the validation question (definition \ref{valideq}) can be determined.
%\subsection{Computational Complexity of Verification Algorithm}
\subsection{Computational Complexity}
We describe this problem as validating an equation candidate in the family of equation candidates involving elements of $F$, $|F| = n$, of size at most $s$, and of degree at most $d$ that satisfies the data of $r$ experiments. We assume that all additive and multiplicative operations are floating point operations.
Evaluating each equation $A_{j,k}, 1 \leq j \leq s-1, 1 \leq k \leq r$, takes $nd$. Thus, constructing the matrix equation $A$ will take O($rsdn$) time.
$\\$
As a result, determining whether a given equation candidate is valid involves calculating $A^*A$, determining the existence of $(A^*A)^{-1}$, calculating $AA^+$, and the value of $(AA^+-I)\vec{b}$. These operations are done in O($rs^2$), O($s^3$), O($rs^2$), and O($rs$) time respectively, where $s$ is the size of the equation candidate and $r$ is the number of experiments. By (corollary $\ref{sr})$, we see that $r \geq s-1$. Thus, if
\begin{equation}
\label{tdef}
t=\max(dn,r),
\end{equation}
the running time of checking whether an equation candidate is valid is O($tr^2$). We can now develop a search algorithm that applies this algorithm repeatedly to many different candidate equations to find one that is valid.
\section{Equation Candidate Search}
\label{algo}
\subsection{Exhaustive Search Algorithm}
\label{exh}
We have demonstrated an algorithm that can verify whether a given candidate equation is valid. However, the second half of the problem of deriving algebraic equations that fit our data set is a search algorithm that finds valid equation candidates in a certain family of candidate equations. Referring to \cite{sentences}, we can denote this problem as finding a valid equation candidate in the family of equation candidates involving elements of $F$, $|F| = n$, of size at most $s$, and of degree at most $d$ that satisfies the data of $r$ experiments, where $r \geq s-1$ $($cor $\ref{sr})$. We also bound $d$ such that $d \leq {r / n}$.
$\\$

We describe an exhaustive search algorithm for this problem as follows.% \cite{sentences}.
$\\$
\begin{enumerate}
    \item Begin by finding all valid equation candidates of size 1. This simply entails finding all $F_i \in F$ such that $F_{i,t}=0$ for each experiment $t$.
    \item Then find all valid equation candidates of size 2. This is begun by enumerating through all
    $\\$
    $A_1=F_1^{f_1}\dots F_n^{f_n}, 0 \leq f_j \leq d, 1 \leq j \leq n$, and no $F_j$ is such that $F_{j,t}=0$ for each experiment $t$.
    $\\$
    For each $A_1$ we generate, we choose each $A_2$ generated as above such that $A_1 \neq A_2$. We then obtain a list of all possible equation candidates of the form $A_1 + A_2 = 0$.
    \item For each candidate $A_1 + A_2 = 0$, we apply (\ref{leftinverse}) to verify the equation candidate is valid.
    \item To find all valid equation candidates of size $k \leq s$, for each instance of $A_1 + \dots + A_{k-1} = 0$, we add an instance of $A_k$ generated as in step 2 such that $A_k \neq A_1, \dots,A_{k-1}$,
    \item Repeat the inductive step until you generate all candidate equations of size at most $s$.
\end{enumerate}
To generate each $A_i$, it requires at most $n^d$ steps. Since $dn \leq r$, from (\ref{tdef}), we see that $t = r$. Thus, to check the validity of all candidate equations of size at most $s$, the number of steps is
\begin{equation}
\label{exhcomp}
nr+\sum_{i=2}^s n^{id} tr^2 \approx n^{ds} tr^2 \leq n^{\frac{rs}{n}} r^3=\left(n^{\frac{1}{n}}\right)^{rs} r^3.
\end{equation}
Combining (\ref{exhcomp}) with the fact that $e^{\frac{1}{e}} \geq x^{\frac{1}{x}}$ for all $x \in R$ yields
\begin{equation}
\label{exhcomplex}
\left(n^{\frac{1}{n}}\right)^{rs} r^3 \leq \left(e^{\frac{1}{e}}\right)^{rs} r^3.
\end{equation}
Using (\ref{exhcomp}) and (\ref{exhcomplex}), we see that the time complexity of this algorithm is $e^{O(1)rs}$.
$\\$
One drawback of this exhaustive search algorithm is that, in order to enumerate through all equations, the exponents of each variable of each additive term must be enumerated through as well. \textbf{\textit{Using a property of finite fields, there is a method to find all valid equation candidates without parsing through all exponents of a variable.}}

\subsection{Finite Field Sieve Algorithm}
\subsubsection{Introduction to Finite Fields}
We will explain some relevant properties of finite fields $\cite{fforder}$. A finite field of order $p$ is some set $F_p$ such that $|F_p|=p$, and two operations $+,*$ that satisfy some properties: For all $s_1,s_2,s_3 \in F_p$,
\begin{enumerate}
\item $s_1*s_2,s_1+s_2 \in F_p$.
\item $(s_1+s_2)+s_3=s_1+(s_2+s_3). (s_1*s_2)*s_3=s_1*(s_2*s_3)$
\item $s_1+s_2=s_2+s_1$, $s_1*s_2=s_2*s_1$.
\item There exists a unique $0,1 \in F_p$ such that $0+s_1=s_1$,$1*s_1=s_1$.
\item There exists a unique $s_3,s_4$ such that $s_3+s_1 = 0$ and $s_4*s_1=1$.
\item $s_1*(s_2+s_3)=s_1*s_2+s_1*s_3$.
\end{enumerate}
One important property of a finite field is the order of an element $s \in F_p$. This is defined as the least exponent $d \in Z^+$ such that $s^d=1$. In a finite field where $p$ is prime, every non-zero element in $F_p$ order $1$ or a divisor of $p-1$.

\subsubsection{Algorithm Preliminaries}

Assume we have some valid equation candidate of some degree $d$. If the data in our data sets are of some finite field $F_p$, where $p$ is prime, then each $A_i=F_1^{f_1}\dots F_n^{f_n}$ in our candidate is congruent to $F_1^{f_1'}\dots F_n^{f_n'}$, where $f_1'=f_1$ mod p-1, $\dots$, $f_n'=f_n$ mod p-1. As a result, if a verification algorithm can be performed in a finite field, we can bound the exponents of the family of equation candidates to be searched through.
$\\$
We then show that, through a modified validation algorithm, it is possible to obtain a set of equation candidates such that one of those is a valid equation candidate.
\begin{theorem}
\label{ff}
Let there exists a homomorphism
\begin{equation}
\label{homo}
\varphi: Q \rightarrow Z/pZ,
\end{equation} where $p$ is prime, and let each $F_j \in Q,1 \leq j \leq n$. Also, let each $F_j'=\varphi(F_j), 1 \leq j \leq n$, and
\begin{equation}
A_i'=F_1'^{f_1}\dots F_n'^{f_n}, f_j \in Z^{+} \cup \{0\}, 1 \leq j \leq n.
\end{equation}
If $A_1 + \dots + A_s = 0$ is a valid equation candidate, then there exists a solution $\vec{x'} \in (Z/pZ)^{s-1}$ such that
\begin{equation}
\left[ \begin{array}{ccc}
A_{2,1}' & \dots & A_{s,1}'\\
\vdots & \ddots & \vdots \\
A_{2,r}' & \dots & A_{s,r}'\end{array}\right]
\vec{x'}
=\left[ \begin{array}{c}
-A_{1,1}' \\
\vdots \\
-A_{1,r}' \end{array}\right].
\end{equation}
\end{theorem}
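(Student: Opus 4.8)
The plan is to exploit the fact that validity already hands us the constants, and then simply transport the scalar relations through $\varphi$. By Definition \ref{valideq} (equivalently, by Theorem \ref{leftinverse}), the hypothesis that $A_1 + \dots + A_s = 0$ is valid means there exists a vector $\vec{k} = (k_2, \dots, k_s)$ with
\[
0 = A_{1,t} + k_2 A_{2,t} + \dots + k_s A_{s,t}
\]
for every experiment $t$, $1 \le t \le r$. This is a collection of $r$ scalar identities in the value field, and the whole proof reduces to applying $\varphi$ to each of them and reassembling the results into a matrix equation.

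First I would establish that $\vec{k}$ lies in the domain of $\varphi$, i.e.\ that each $k_i \in Q$, since otherwise $\varphi(k_i)$ is meaningless. Because every $F_j \in Q$, each evaluated monomial $A_{i,t}$ is a product of rationals and hence rational, so the data matrix $A$ and the vector $\vec{b}$ of (\ref{relat1}) have rational entries. By Theorem \ref{leftinverse} the columns of $A$ are linearly independent, so $\vec{k} = A^{+}\vec{b} = (A^{*}A)^{-1}A^{*}\vec{b}$ by (\ref{mprcalc}); conjugation is trivial on rationals and the inverse of an invertible rational matrix is again rational, whence $\vec{k} \in Q^{s-1}$.

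Next I would push $\varphi$ through each identity. Since $\varphi$ respects multiplication, applying it to a monomial gives $\varphi(A_{i,t}) = \varphi(F_{1,t})^{f_1}\cdots\varphi(F_{n,t})^{f_n} = A_{i,t}'$, so the image of each evaluated term is exactly the reduced term $A_{i,t}'$. Applying $\varphi$ to the scalar identity above, and using that $\varphi$ respects addition and sends $0$ to $0$, yields
\[
0 = A_{1,t}' + \varphi(k_2)\,A_{2,t}' + \dots + \varphi(k_s)\,A_{s,t}'
\]
for every $t$. Setting $\vec{x'} = (\varphi(k_2), \dots, \varphi(k_s)) \in (Z/pZ)^{s-1}$ and stacking the $r$ identities into matrix form produces precisely the claimed system. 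The theorem asks only for existence, so exhibiting this single solution $\vec{x'} = \varphi(\vec{k})$ suffices; no uniqueness argument is needed.

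The main obstacle is the interplay between the field $Q$ and the reduction map. There is no nonzero unital ring homomorphism from all of $Q$ onto $Z/pZ$, since $1/p$ can have no image; the intended $\varphi$ is really reduction on the local ring of $p$-integral rationals. Consequently one must tacitly assume that the denominators occurring in the data and in the recovered constants $k_i$ are coprime to $p$, for otherwise $\varphi(k_i)$ and the entries $A_{i,t}'$ are undefined. Making this domain condition explicit, and checking that the pseudoinverse computation of Step~2 never forces a factor of $p$ into a denominator, is the delicate point; once it is granted, the transport of the relations through $\varphi$ in Step~3 is purely formal.
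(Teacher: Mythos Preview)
Your argument is correct and follows the same route as the paper: use validity to obtain rational constants $\vec{k}$, then push the scalar relations through the reduction map to produce $\vec{x}'=\varphi(\vec{k})$. The only cosmetic difference is that the paper makes the reduction concrete by first clearing denominators (multiplying $A$ and $\vec{b}$ by a common $k\in\mathbb{Z}^+$ so the entries are integers) and then reducing mod $p$, whereas you invoke $\varphi$ abstractly and instead flag, quite rightly, that a unital ring map $Q\to Z/pZ$ cannot be globally defined and must be read as reduction on the $p$-integral rationals; the paper's denominator-clearing step is its (implicit) way of coping with the same issue, and your caveat about $p$ possibly dividing a denominator of some $k_i$ is a genuine gap in the statement that the paper's proof does not close either.
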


\begin{proof}
Let $A$ be the equation matrix of $A_1 + \dots + A_s$ and
\[
\vec{b}=
\left[ \begin{array}{c}
-A_{1,1} \\
\vdots \\
-A_{1,r} \end{array}\right].
\]

We see that if there exists a unique $\vec{x} \in Q^{s-1}$ such that $A \vec{x}-\vec{b}=\vec{0}$, then $\exists k \in Z^+$, the least common denominator of all entries of $A$ and $\vec{b}$ such that $k \left( A \vec{x}-\vec{b} \right)=k\vec{0}=\vec{0}$, with $kA \in Z^{r,s-1},k\vec{b} \in Z^{r}$.
$\\$
We then see that $\exists \vec{x}'= k\vec{x}$ mod p such that $kA\vec{x}'-k\vec{b}=\vec{0}$ mod p.
$\\$
Define homomorphisms $\phi: Q \rightarrow kQ$, $\phi(q)=kq$, and $\Phi: Z \rightarrow Z/pZ$, $\Phi(kq)=kq$ mod p. Thus, we can define $\varphi: Q \rightarrow Z/pZ$ as $\varphi(q)=\Phi(\phi(q))$.
$\\$
Thus, we have that $\Phi(\phi(A \vec{x}-\vec{b}))=\varphi(A)\varphi(\vec{x})-\varphi(\vec{b})=\Phi(\phi(\vec{0}))=\vec{0}$, so there exists $\varphi(\vec{x})$ such that $\varphi(A)\varphi(\vec{x})=\varphi(\vec{b})$.
$\\$
\end{proof}
We can modify the algorithm in (\ref{leftinverse}) for solving a linear system $A\vec{x}=\vec{b}$ over a finite field in O$(tr^2)$ time if such a solution exists, where the dimensions of $A$ are $r \times s$ $\cite{ffsolve}$. As a result, there exists a polynomial time algorithm to "validate" the equation over the finite field.
$\\$
\subsubsection{Algorithm Description}
We will outline an algorithm to search for valid equation candidates of size at most $s$ and degree at most $d$ that satisfies the data of $r$ experiments, where $r \geq s-1$. Let $p \leq \sqrt{d}$ be some prime. In addition, bound $d$ such that $d \leq {r / n}$. Assume that all elements of our data set are floating point numbers and all operations are floating point operations.
$\\$
\begin{enumerate}
    \item Multiply each entry in the data set by a common denominator $10^k,k \in Z^+$.
    \item Take the modulo $3$ of all elements in the data set and place all values in a duplicate data set $D_3$.
    \item Perform the exhaustive search algorithm in section $\ref{exh}$ on equation candidates of size $s$, degree at most $3-1=2$, that satisfies the data in $D_3$ of the $r$ experiments. For the verification algorithm, use (\ref{leftinverse}), but modified to apply to finite fields $\cite{ffsolve}$, to write down the validated equation candidates in $F_3$ in some list $EqF_3$.
    \item Repeat steps 2 and 3 for a finite field of order 5, order 7, $\dots, p$.
    \item Take equation candidates of size at most $s$ and degree $d$ such that, upon taking modulo 3 of each exponent, yields an equation candidate in $EqF_3$, taking modulo 5 of each exponent, yields an equation candidate in $EqF_5$, $\dots$, taking modulo p of each exponent, yields an equation candidate in $EqF_p$, and write them down in $FFV$.
    \item Validate the equation candidates in $FFV$ using the original data set and the algorithm denoted in $\ref{leftinverse}$ to obtain the valid equation candidates.
\end{enumerate}
$\\$
To multiply entry in the data set by a common denominator $10^k,k \in Z^+$ takes $nr$ operations. Taking the modulo $p$ of each operation takes $nr$O$(1)$ operations. Since $dn \leq r$, from (\ref{tdef}), we see that $t = r$. We then see that $p \leq \sqrt{d} \leq \sqrt{{r / n}}$.
Thus, using (\ref{exhcomp}) and (\ref{exhcomplex}), checking the validity of all candidate equations of size at most $p$ is approximately
\begin{equation}
\label{ff1}
n^{\sqrt{\frac{r}{n}}}r^3)  \approx e^{O(1)\sqrt{r}s}.
\end{equation}
$\\$
To find all equation candidates in $FFV$ involves first finding equation candidates in $EqF_3$ equivalent to, with respect to modulo 3 of the exponents, equation candidates in $EqF_5$, with respect to modulo 5 of the exponents, and so on, and placing them in $FFV'$. This is accomplished by solving $n$ linear equations, at most $p$ times, on at most $n^{3s}$ equation candidates. This takes $n^{3s+1}O(d^2)p$. This shows that $|FFV'|=k$ is approximately constant.
$\\$
We also must take into account that the exponents of those equation candidates in $FFV'$ are in modulo $p$. For each exponent, there are $p$ possible exponents because $p \leq \sqrt{d}$. As a result, there are $k\left(n^{s}\right)^p$ equation candidates in $FFV$ that we must validate using the algorithm denoted in $\ref{leftinverse}$. Thus, using (\ref{exhcomp}) and (\ref{exhcomplex}), the running time is \begin{equation}
\label{ff2}
k\left(n^{s}\right)^p r^3 \approx e^{O(1)s\sqrt{r}} r^3.
\end{equation}
$\\$
Thus, adding the running times of (\ref{ff1}) and (\ref{ff2}) yields the running time of the Finite Field Sieve, which is
\begin{equation}
\label{ff_final}
e^{O(1)\sqrt{r}s}.
\end{equation}
\section{Numerical Results} %{Numerical Methods and Results}
\label{compmeth}

Equation candidates for the van der Waals equation of state \cite{vanderwaals},
\begin{equation}
\begin{split}
\left(P+a\left(\frac{n}{V}\right)^2 \right)\left(\frac{V}{n}-b\right) =
RT \implies
\\
PV^3 - bnPV^2+an^2V-abn^3-RnV^2T =
\\
PV^3+k_2nPV^2+k_3n^2V+k_4n^3+k_5nV^2T=0,
\end{split}
\end{equation}
\noindent were generated and verified using simulated data.
$\\$
\noindent The pressure $P$, volume $V$, and number of moles $n$ for 20 different virtual experiments were generated randomly. The van der Waals coefficients $a=2.45 \times 10^{-2},b=2.661\times 10^{-5}$ were values for hydrogen, the gas constant $R$ was set as 8.3145. For each experiment, the temperature $T$ was calculated.
$\\$
The set of inputs and outputs that compose our candidate equations is $L \cup F = \{P,V,n,T\}$. The data points are recorded in Table \ref{table_Waals}.
\begin{table}[h]
 \centering
 \caption{Recorded Data Points for Van Der Waals Test} \label{table_Waals}%
 \begin{tabular}{|c | c | c | c | c|}
 \hline
 Experiment Number & P & V & T & n \\ [0.5ex]
 \hline\hline
 1 & 3 & 2 & 2 & 0.186219 \\
 \hline
 2 & 2 & 3 & 4 & 0.362773\\
 \hline
 3 & 4 & 4 & 5 & 0.38854\\
 \hline
 4 & 5 & 1 & 8 & 0.0987221 \\
 \hline
 5 & 6 & 5 & 1 & 3.60872\\
  \hline
 6 & 7 & 11 & 2.5 & 3.70502\\
  \hline
 7 & 9 & 3 & 2.2 & 1.4782\\
  \hline
 8 & 2.5 & 4.1 & 7.3 & 0.174113\\
  \hline
 9 & 5.3 & 6.4 & 9.7 & 0.425028\\
  \hline
 10 & 4.4 & 3.2 & 8.2 & 0.214052\\[1ex]
 \hline
\end{tabular}
\end{table}
We searched in the family of algebraic equation candidates of additive size $5$ and exponent order $5$.
The exhaustive search and finite field sieve methods were used to generate candidate equations and apply their respective search methods to find the valid equation candidates.
$\\$
For various valid and invalid candidates related to the van der Waals equation, the existence of the Moore-Penrose inverse and validity of the subsequent equation $(AA^+-I)\vec{b}=0$ for each of their corresponding data matrices was verified in Julia.
$\\$
In determining the existence of the Moore-Penrose inverse, Gaussian elimination is applied to the matrix $A^*A$. In turning the matrix into row echelon form, if the algorithm detects a diagonal entry that is within some bound $\epsilon=0.0001$ of zero, the matrix is determined to be not invertible. If all diagonal entries are outside that bound, the matrix is considered left invertible.
$\\$
The left Moore-Penrose pseudoinverse is calculated as $(A^*A)^{-1}A^*$. $(A^*A)^{-1}$ is calculated by performing the same Gauss-Jordan operations in reducing matrix $A^*A$ to reduced row echelon form to the identity matrix.
$\\$
In determining the validity of some equation candidate, we compute, based on the values of the equation matrix $A$, pseudoinverse $A^+$, and $\vec{b}$, the vector
\begin{equation}
\label{endmat}
\vec{b'} = AA^+ \vec{b}.
\end{equation}
Construct some vector $\vec{c}$ using (\ref{endmat}), such that \begin{equation}
\label{conc}
\vec{c}[i] = \frac{\vec{b}[i] - \vec{b'}[i]}{\vec{b}[i]}, \quad 1 \leq i \leq \dim(\vec{b}).
\end{equation}
After the construction of $\vec{c}$ from (\ref{conc}), if the result of $\rVert \vec{c} \rVert$ is within some bound $\epsilon = 0.0001$ of zero, the associated candidate equation is judged as valid. If the equation is outside the bound, the equation candidate is judged as not valid.
$\\$
Below is a selection of outputs by the exhaustive search algorithm.
$\\$
$\\$
$\vdots$
$\\$
$PV + PVT + NT \implies k_2= 2.3142 \quad k_3 = -2.1348$
$\\$
$PV + PVT + NT^2 \implies k_2 = 1.7452 \quad k_3 = -1.3145$
$\\$
$\vdots$
$\\$
$PV^3+bnPV^2+n^2V+n^3+nV^2T \implies k_2= -2.661\times 10^-5 \quad k_3 = 2.45 \times 10^-2 $
$\\$
$k_4 = 6.51935 \times 10^-7 \quad k_5 = -8.3145$
$\\$
$\\$
In applying the exhaustive search and finite field sieve, both algorithms yielded the same valid equation $PV^3+nPV^2+n^2V+n^3+nV^2T$ when $\epsilon = 0.0001$. Both algorithms also rejected all other equation candidates in our search family of equations.

\section{Concluding Remarks}
\label{conclusion}
We have demonstrated a novel approach in deriving compact laws from a data set. In this approach, an algebraic model is derived to verify whether a certain algebraic equation fits the data set. With this model, we solve the problem of determining constants for equations with additive terms. We have also developed algorithms to parse through a family of algebraic equations, one by exhaustive search and the other based on the finite field sieve, which is more efficient. In addition, we prove that both algorithms are guaranteed to converge in finding an equation that describes a data set if the equation belongs to the family in which the algorithms are applied.
$\\$
$\\$
The devised algebraic equation verification algorithm runs in O($tr^2$) time. $r$ is the number of experiments done, and $t = \max(dn,r)$, with $d$ being the number of additive terms of the candidate, and $m,n$ being the number of input and output variables in the data. We have proved that the question of equation validation with respect to constants is exactly the question of the existence of the pseudoinverse $A^+$ and the calculation of $(AA^+ - I)\vec{b}$ (Theorem \ref{leftinverse}). A numerical method was devised to determine whether a pseudoinverse exists with respect to a bound. Another method was used to determine whether $(AA^+ - I)\vec{b}$ is within some bound of $\vec{0}$ using squared residuals. This algorithm succeeded in validating the van der Waals equation and calculating its constants, while rejecting all other equation candidates. The exhaustive search algorithm runs in $e^{O(1)rs}$ time, where $r$ is the number of experiments in the data and $s$ is the maximum number of additive terms in the family of equations to be searched. The finite field sieve algorithm improves on our exhaustive search algorithm by converting our data into elements of a finite field to reduce the number of equations needed to be searched. This algorithm runs in $e^{O(1)\sqrt{r}s}$ time, which makes the algorithm run in sub-exponential time with respect to the number of experiments. Both algorithms searched through the same family of algebraic equations relating to the van der Waals equation and converged on the valid equation.
$\\$
$\\$
The proposed approach transforms the problem of deriving meaning from data into formulating a linear algebra model and finding equations that fit the data. Such a formulation allows the finding of equation candidates in an explicit mathematical manner. For a certain type of compact theory, our approach assures convergence, with the discovery being computationally efficient and mathematically precise. However, several limitations exist. One is that not all natural laws are in algebraic form.  For example, for an RLC circuit \cite{rlc}, which includes exponents, sinusoids, and complex numbers, applying our algorithms to data of this type would yield a Taylor approximation of the equation, which may not be accurate for larger values. Another lies in the fact that the Finite Field Sieve algorithm is still relatively slow due to it being exponential with respect to the maximum number of additive terms to be searched through. These problems may limit the algorithm's effectiveness on certain data sets. To improve this approach, further work includes finding a mathematical analogue of this process applicable to vectors. Currently, all vector algebraic equations must be found component-wise. In addition, applying certain transforms (e.g.,~Fourier or Laplace) on exponential, logarithmic, or sinusoidal equation candidates may expand the number of data types that can fit an equation.
$\\$
$\\$
We believe the most promising direction is determining a search algorithm based on linear algebra such that valid candidate equations can be discovered with high probability. This is due to the candidate equation evolution dependent on multiplication by diagonal matrices, which do not change a matrice's eigenvector space \cite{linalg}. There are several potential search criterion that are worthy of study to use as a tool for supervised training. This may lead to a probabilistic algorithm that runs in polynomial time. This re-expression of the problem of derivation of natural laws from data into a linear algebra model creates enormous potential for refining constant evaluation,
search, and learning algorithms.

\section{Materials}
The Julia code is available with this pre-print on arXiv. The code is distributed under a Creative Commons Attribution 4.0 International Public License. If you use this work please attribute to Wenqing Xu and
Mark Stalzer, Deriving compact laws based on algebraic formulation of a data set, arXiv, 2017.

\section{Acknowledgements}
This research was funded by the Gordon and Betty Moore Foundation through Grant GBMF4915 to the Caltech Center for Data-Driven Discovery.
$\\$
$\\$
This researched was conducted as the named Caltech SURF program of Dr. Jane Chen.

\bibliographystyle{ieeetr}
%\bibliography{/home/hcohl/tex/refbib.bbl}   %HOWARD
\bibliography{6_15_17_Discovery}   %ROBERTO
%\bibliography{/home/prh2/tex/refbib.bib}
\end{document}